\documentclass{article}



\usepackage[final]{nips_2018}


\usepackage[utf8]{inputenc} 
\usepackage[T1]{fontenc}    
\usepackage{hyperref}       
\usepackage{url}            
\usepackage{amsfonts}       
\usepackage{nicefrac}       
\usepackage{microtype}      

\usepackage{amsmath,amssymb, amsthm}

\DeclareMathOperator*{\argmin}{argmin}
\usepackage{graphicx}
\usepackage{subcaption}
\usepackage{adjustbox}

\usepackage{booktabs}
\usepackage{multirow}
\captionsetup[table]{skip=10pt}
\usepackage{algorithm}
\usepackage[noend]{algpseudocode}
\usepackage{natbib}

\renewcommand{\algorithmicrequire}{\textbf{Input }}
\renewcommand{\algorithmicensure}{\textbf{Output }}

\title{Generalized Cross Entropy Loss for Training Deep Neural Networks with Noisy Labels}

%

\author{Anonymous \\
 \\
}
\author{
 Zhilu Zhang \qquad  Mert R. Sabuncu \\
 Electrical and Computer Engineering\\
 Meinig School of Biomedical Engineering \\
 Cornell University\\
 \texttt{zz452@cornell.edu, msabuncu@cornell.edu} \\
}
\begin{document}

\maketitle

\begin{abstract}
Deep neural networks (DNNs) have achieved tremendous success in a variety of applications across many disciplines. 
Yet, their superior performance comes with the expensive cost of requiring correctly annotated large-scale datasets. 
Moreover, due to DNNs' rich capacity, errors in training labels can hamper performance. 
To combat this problem, mean absolute error (MAE) has recently been proposed as a noise-robust alternative to the commonly-used categorical cross entropy (CCE) loss. 
However, as we show in this paper, MAE can perform poorly with DNNs and challenging datasets.
Here, we present a theoretically grounded set of noise-robust loss functions that can be seen as a generalization of MAE and CCE. 
Proposed loss functions can be readily applied with any existing DNN architecture and algorithm, while yielding good performance in a wide range of noisy label scenarios. 
We report results from experiments conducted with CIFAR-10, CIFAR-100 and FASHION-MNIST datasets and synthetically generated noisy labels. 
\end{abstract}

\section{Introduction}
The resurrection of neural networks in recent years, together with the recent emergence of large scale datasets, has enabled super-human performance on many classification tasks~\cite{krizhevsky2012imagenet, mnih2015human, noh2015learning}. 
However, supervised DNNs often require a large number of training samples to achieve a high level of performance. 
For instance, the ImageNet dataset~\cite{deng2009imagenet} has 3.2 million hand-annotated images. 
Although crowdsourcing platforms like Amazon Mechanical Turk have made large-scale annotation possible, some error during the labeling process is often inevitable, and mislabeled samples can impair the performance of models trained on these data. 
Indeed, the sheer capacity of DNNs to memorize massive data with completely randomly assigned labels~\cite{zhang2016understanding} proves their susceptibility to overfitting when trained with noisy labels. 
Hence, an algorithm that is robust against noisy labels for DNNs is needed to resolve the potential problem.
Furthermore, when examples are cheap and accurate annotations are expensive, it can be more beneficial to have datasets with more but noisier labels than less but more accurate labels \cite{khetan2017learning}. 

Classification with noisy labels is a widely studied topic~\cite{frenay2014classification}. 
Yet, relatively little attention is given to directly formulating a noise-robust loss function in the context of DNNs.
Our work is motivated by Ghosh et al. \cite{ghosh2017robust} who theoretically showed that mean absolute error (MAE) can be robust against noisy labels under certain assumptions.
However, as we demonstrate below, the robustness of MAE can concurrently cause increased difficulty in training, and lead to performance drop. This limitation is particularly evident when using DNNs on complicated datasets. 
To combat this drawback, we advocate the use of a more general class of noise-robust loss functions, which encompass both MAE and CCE. 
Compared to previous methods for DNNs, which often involve extra steps and algorithmic modifications, changing only the loss function requires minimal intervention to existing architectures and algorithms, and thus can be promptly applied. 
Furthermore, unlike most existing methods, the proposed loss functions work for \textit{both} closed-set and open-set noisy labels~\cite{wang2018iterative}.  
Open-set refers to the situation where samples associated with erroneous labels do not always belong to a ground truth class contained within the set of known classes in the training data. 
Conversely, closed-set means that all labels (erroneous and correct) come from a known set of labels present in the dataset.

The main contributions of this paper are two-fold.
First, we propose a novel generalization of CCE and present a theoretical analysis of proposed loss functions in the context of noisy labels. 
And second, we report a thorough empirical evaluation of the proposed loss functions using CIFAR-10, CIFAR-100 and FASHION-MNIST datasets, and demonstrate significant improvement in terms of classification accuracy over the baselines of MAE and CCE, under both closed-set and open-set noisy labels. 

The rest of the paper is organized as follows. Section~\ref{sec_2} discusses existing approaches to the problem. Section~\ref{sec_3} introduces our noise-robust loss functions. Section~\ref{sec_4} presents and analyzes the experiments and result. Finally, section~\ref{sec_5} concludes our paper. 

\section{Related Work}\label{sec_2}
Numerous methods have been proposed for learning with noisy labels with DNNs in recent years. Here, we briefly review the relevant literature. Firstly, Sukhbaatar and Fergus \cite{sukhbaatar2014learning} proposed accounting for noisy labels with a confusion matrix so that the cross entropy loss becomes
\begin{align}
\mathcal{L}(\theta) = \frac{1}{N}\sum^N_{n = 1} - \log p(\widetilde{y} = \widetilde{y}_n |\boldsymbol{x}_n, \theta) =  \frac{1}{N}\sum^N_{n = 1} - \log (\sum_{i}^c p(\widetilde{y} = \widetilde{y}_n | y = i) p (y = i |\boldsymbol{x}_n, \theta)),
\label{eq:1}
\end{align}
where $c$ represents number of classes, $\widetilde{y}$ represents noisy labels, $y$ represents the latent true labels and $p(\widetilde{y} = \widetilde{y}_n | y = i)$ is the $(\widetilde{y}_n,i)$'th component of the confusion matrix. 
Usually, the real confusion matrix is unknown. 
Several methods have been proposed to estimate it \cite{goldberger2016training, hendrycks2018using, patrini2017making, jindal2016learning, han2018masking}.
Yet, accurate estimations can be hard to obtain. 
Even with the real confusion matrix, training with the above loss function might be suboptimal for DNNs. 
Assuming (1) a DNN with enough capacity to memorize the training set, and (2) a confusion matrix that is diagonally dominant, minimizing the cross entropy with confusion matrix is equivalent to minimizing the original CCE loss.
This is because the right hand side of Eq.~\ref{eq:1} is minimized when $p (y = i |\boldsymbol{x}_n, \theta) = 1$ for $i = \widetilde{y}_n$ and $0$ otherwise, $\forall$ $n$.     
 
In the context of support vector machines, several theoretically motivated noise-robust loss functions like the ramp loss, the unhinged loss and the savage loss have been introduced \cite{brooks2011support,van2015learning,masnadi2009design}. More generally, Natarajan et al. \cite{natarajan2013learning} presented a way to modify any given surrogate loss function for binary classification to achieve noise-robustness. However, little attention is given to alternative noise robust loss functions for DNNs. Ghosh et al. \cite{ghosh2015making,ghosh2017robust} proved and empirically demonstrated that MAE is robust against noisy labels.
This paper can be seen as an extension and generalization of their work.
 
Another popular approach attempts at cleaning up noisy labels. 
Veit et al. \cite{veit2017learning} suggested using a label cleaning network in parallel with a classification network to achieve more noise-robust prediction. 
However, their method requires a small set of clean labels. 
Alternatively, one could gradually replace noisy labels by neural network predictions \cite{reed2014training,tanaka2018joint}. 
Rather than using predictions for training, Northcutt et al. \cite{northcutt2017learning} offered to prune the correct samples based on softmax outputs. 
As we demonstrate below, this is similar to one of our approaches. 
Instead of pruning the dataset once, our algorithm iteratively prunes the dataset while training until convergence. 

Other approaches include treating the true labels as a latent variable and the noisy labels as an observed variable so that EM-like algorithms can be used to learn true label distribution of the dataset \cite{xiao2015learning,khetan2017learning,vahdat2017toward}. 
Techniques to re-weight confident samples have also been proposed. 
Jiang et al. \cite{jiang2017mentornet} used a LSTM network on top of a classification model to learn the optimal weights on each sample, while Ren, et al. \cite{ren2018learning} used a small clean dataset and put more weights on noisy samples which have gradients closer to that of the clean dataset. 
In the context of binary classification, 
Liu et al. \cite{liu2016classification} derived an optimal importance weighting scheme for noise-robust classification. Our method can also be viewed as re-weighting individual samples; instead of explicitly obtaining weights, we use the softmax outputs at each iteration as the weightings. 
Lastly, Azadi et al. \cite{azadi2015auxiliary} proposed a regularizer that encourages the model to select reliable samples for noise-robustness. Another method that uses knowledge distillation for noisy labels has also been proposed~\cite{li2017learning}. Both of these methods also require a smaller clean dataset to work. 

\section{Generalized Cross Entropy Loss for Noise-Robust Classifications}\label{sec_3}
\subsection{Preliminaries}
We consider the problem of k-class classification. Let $ \mathcal{X} \subset \mathbb{R}^d$ be the feature space and $\mathcal{Y} = \{ 1, \cdots , c \}$ be the label space. In an ideal scenario, we are given a clean dataset $D = \{ (\boldsymbol{x}_i, y_i) \}_{i = 1}^n$, where each $ (\boldsymbol{x}_i, y_i) \in (\mathcal{X} \times \mathcal{Y}) $. A classifier is a function that maps input feature space to the label space $f: \mathcal{X} \rightarrow \mathbb{R}^c $. In this paper, we consider the common case where the function is a DNN with the softmax output layer.
For any loss function $\mathcal{L}$, the (empirical) risk of the classifier $f$ is defined as $R_\mathcal{L}(f) = \mathbb{E}_D [\mathcal{L}(f(\boldsymbol{x}),y_{\boldsymbol{x}})]$
, where the expectation is over the empirical distribution.
The most commonly used loss for classification is cross entropy. In this case, the risk becomes:
\begin{align}
R_\mathcal{L}(f) = \mathbb{E}_D [\mathcal{L}(f(\boldsymbol{x};\boldsymbol{\theta}),y_{\boldsymbol{x}})] = -\frac{1}{n} \sum^n_{i = 1}\sum^c_{j=1} \boldsymbol{y}_{ij} \log f_j(\boldsymbol{x}_i;\boldsymbol{\theta}),
\end{align}
where $\boldsymbol{\theta}$ is the set of parameters of the classifier, $\boldsymbol{y}_{ij}$ corresponds to the $j$'th element of one-hot encoded label of the sample $\boldsymbol{x}_i$, $\boldsymbol{y}_i = \boldsymbol{e}_{y_i} \in \{ 0, 1 \}^c$ such that $\boldsymbol{1}^{\top} \boldsymbol{y}_i = 1 \; \forall \; i$, and $f_j$ denotes the $j$'th element of $f$. Note that, $\sum_{j=1}^{n} f_j(\boldsymbol{x}_i;\boldsymbol{\theta}) = 1$, and $f_j(\boldsymbol{x}_i;\boldsymbol{\theta}) \geq 0, \forall j, i, \boldsymbol{\theta}$, since the output layer is a softmax. The parameters of DNN can be optimized with empirical risk minimization.

We denote a dataset with label noise by $ D_{\eta}  = \{ (\boldsymbol{x}_i, \widetilde{y}_i) \}_{i = 1}^n$ where $\widetilde{y}_i$'s are the noisy labels with respect to each sample such that $p(\widetilde{y}_i = k | y_i = j, \boldsymbol{x}_i) = \eta_{jk}^{(\boldsymbol{x}_i)}$. In this paper, we make the common assumption that noise is conditionally independent of inputs given the true labels so that 
\begin{align}
p(\widetilde{y}_i = k | y_i = j, \boldsymbol{x}_i) = p(\widetilde{y}_i = k | y_i = j) = \eta_{jk}. \nonumber
\end{align}
In general, this noise is defined to be \textit{class dependent}. Noise is \textit{uniform} with noise rate $\eta$, if $\eta_{jk} = 1 - \eta $ for $j = k$, and $\eta_{jk} = \frac{\eta}{c-1}$ for $j \neq k$. The risk of classifier with respect to noisy dataset is then defined as $R_\mathcal{L}^{\eta}(f) = \mathbb{E}_{D_{\eta}} [\mathcal{L}(f(\boldsymbol{x}),\widetilde{y}_{\boldsymbol{x}})]$. 

Let $f^*$ be the global minimizer of the risk $R_\mathcal{L}(f)$. Then, the empirical risk minimization under loss function $\mathcal{L}$ is defined to be \textit{noise tolerant} \cite{manwani2013noise} if $f^*$ is a global minimum of the noisy risk $R_\mathcal{L}^{\eta}(f)$. 

A loss function is called \textit{symmetric} if, for some constant $C$, 
\begin{align}
\sum^c_{j = 1} \mathcal{L}(f(\boldsymbol{x}),j) = C, \quad \forall x\in \mathcal{X}, \, \forall f. 
\end{align}
The main contribution of Ghosh et al. \cite{ghosh2015making} is they proved that if loss function is \textit{symmetric} and $\eta <\frac{c-1}{c}$, then under \textit{uniform} label noise, for any $f$, $R_\mathcal{L}^{\eta}(f^*) - R_\mathcal{L}^{\eta}(f) \leq 0$. Hence, $f^*$ is also the global minimizer for $R_\mathcal{L}^{\eta}$ and $\mathcal{L}$ is noise tolerant. Moreover, if $R_\mathcal{L}(f^*) = 0$, then $\mathcal{L}$ is also noise tolerant under class dependent noise. 

Being a \textit{nonsymmetric} and \textit{unbounded} loss function, CCE is  sensitive to label noise. On the contrary, MAE, as a \textit{symmetric} loss function, is noise robust. 
For DNNs with a softmax output layer, MAE can be computed as:
\begin{align}
\mathcal{L}_{MAE}(f(\boldsymbol{x}), \boldsymbol{e}_j) = || \boldsymbol{e}_j -  f(\boldsymbol{x})||_1 = 2 - 2f_j(\boldsymbol{x}).
\end{align}
With this particular configuration of DNN, the proposed MAE loss is, up to a constant of proportionality, the same as the unhinged loss $\mathcal{L}_{unh}(f(\boldsymbol{x}), \boldsymbol{e}_j) = 1 - f_j(\boldsymbol{x})$ \cite{van2015learning}. 

\subsection{$\mathcal{L}_{q}$ Loss for Classification}
In this section, we will argue that MAE has some drawbacks as a classification loss function for DNNs, which are normally trained on large scale datasets using stochastic gradient based techniques. Let's look at the gradient of the loss functions:
\begin{equation}
\sum^n_{i = 1}\frac{\partial \mathcal{L}(f(\boldsymbol{x}_i ; \boldsymbol{\theta}), y_i)}{\partial \boldsymbol{\theta}} = 
\begin{cases}
\sum^n_{i = 1} - \frac{1}{f_{y_i}(\boldsymbol{x}_i ; \boldsymbol{\theta})}\nabla_{\boldsymbol{\theta}} f_{y_i}(\boldsymbol{x}_i ; \boldsymbol{\theta}) \quad \text{for CCE}  \\
\sum^n_{i = 1} - \nabla_{\boldsymbol{\theta}} f_{y_i}(\boldsymbol{x}_i ; \boldsymbol{\theta}) \quad \text{for MAE/unhinged loss}.
\end{cases}
\end{equation}
Thus, in CCE, samples with softmax outputs that are less congruent with provided labels, and hence smaller $f_{y_i}(\boldsymbol{x}_i ; \boldsymbol{\theta})$ or larger $1 / f_{y_i}(\boldsymbol{x}_i ; \boldsymbol{\theta})$, are implicitly weighed more than samples with predictions that agree more with provided labels in the gradient update. This means that, when training with CCE, more emphasis is put on difficult samples.
This implicit weighting scheme is desirable for training with clean data, but can cause overfitting to noisy labels. 
Conversely, since the $1 / f_{y_i}(\boldsymbol{x}_i ; \boldsymbol{\theta})$ term is absent in its gradient, MAE treats every sample equally, which makes it more robust to noisy labels. 
However, as we demonstrate empirically, this can lead to significantly longer training time before convergence. Moreover, without the implicit weighting scheme to focus on challenging samples, the stochasticity involved in the training process can make learning difficult. 
As a result, classification accuracy might suffer.

To demonstrate this, we conducted a simple experiment using ResNet \cite{he2016deep} optimized with the default setting of Adam \cite{kingma2014adam} on the CIFAR datasets \cite{krizhevsky2009learning}. 
Fig. \ref{cce_mae}(a) shows the test accuracy curve when trained with CCE and MAE respectively on CIFAR-10. 
As illustrated clearly, it took significantly longer to converge when trained with MAE. 
In agreement with our analysis, there was also a compromise in classification accuracy due to the increased difficulty of learning useful features. 
These adverse effects become much more severe when using a more difficult dataset, such as CIFAR-100 (see Fig.~\ref{cce_mae}(b)). 
Not only do we observe significantly slower convergence, but also a substantial drop in test accuracy when using MAE. 
In fact, the maximum test accuracy achieved after 2000 epochs, a long time after training using CCE has converged, was $38.29 \%$, while CCE achieved an higher accuracy of $39.92\%$ after merely 7 epochs! Despite its theoretical noise-robustness, due to the shortcoming during training induced by its noise-robustness, we conclude that MAE is not suitable for DNNs with challenging datasets like ImageNet.
\begin{figure}
\centering
\begin{subfigure}{.31\textwidth}
\centering
\includegraphics[width=1.0\linewidth]{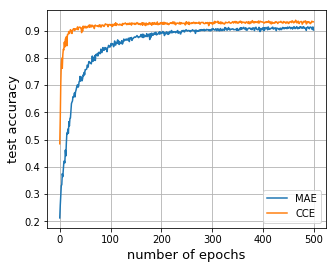}
\label{fig:sfig2}
\caption{}
\end{subfigure}
\begin{subfigure}{.31\textwidth}
\centering
\includegraphics[width=1.0\linewidth]{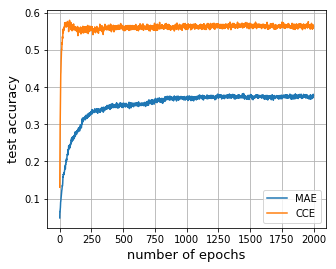}
\label{fig:sfig2}
\caption{}
\end{subfigure}
\begin{subfigure}{.31\textwidth}
\centering
\includegraphics[width=1.0\linewidth]{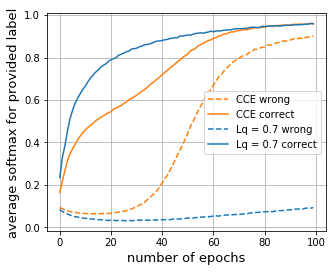}
\label{fig:sfig2}
\caption{}
\end{subfigure}
\caption{(a), (b) Test accuracy against number of epochs for training with CCE (orange) and MAE (blue) loss on clean data with (a) CIFAR-10 and (b) CIFAR-100 datasets. (c) Average softmax prediction for correctly (solid) and wrongly (dashed) labeled training samples, for CCE (orange) and $\mathcal{L}_q$ ($q=0.7$, blue) loss on CIFAR-10 with uniform noise ($\eta = 0.4$).}
\label{cce_mae}
\end{figure}

To exploit the benefits of both the noise-robustness provided by MAE and the implicit weighting scheme of CCE, we propose using the the negative Box-Cox transformation \cite{box1964analysis} as a loss function: 
\begin{equation}
\mathcal{L}_{q}(f(\boldsymbol{x}), \boldsymbol{e}_j) = 
\frac{(1-f_j(\boldsymbol{x})^q)}{q}, \label{eq:Lq}
\end{equation}
where $q \in (0,1]$. Using L'Hôpital's rule, it can be shown that the proposed loss function is equivalent to CCE for $\lim_{q \to 0} \mathcal{L}_{q}(f(\boldsymbol{x}), \boldsymbol{e}_j)$, and becomes MAE/unhinged loss when $q = 1$. 
Hence, this loss is a generalization of CCE and MAE. 
Relatedly, Ferrari and Yang \cite{ferrari2010maximum} viewed the maximization of Eq.~\ref{eq:Lq} as a generalization of maximum likelihood and termed the loss function $\mathcal{L}_{q}$, which we also adopt.

Theoretically, for any input $\boldsymbol{x}$, the sum of $\mathcal{L}_q$ loss with respect to all classes is bounded by:
\begin{align}
\frac{c-c^{(1-q)}}{q} \leq \sum^c_{j=1} \frac{(1-f_j(\boldsymbol{x})^q)}{q} \leq \frac{c-1}{q}. \label{eq:bounds}
\end{align}
Using this bound and under uniform noise with $\eta \leq 1 - \frac{1}{c}$, we can show (see Appendix) 
\begin{align}
A \leq (R_{\mathcal{L}_q}(f^*) - R_{\mathcal{L}_q}(\hat{f})) \leq 0, 
\label{eq:bounds2}
\end{align}
where $A = \frac{\eta [1 - c^{(1-q)}]}{q(c-1-\eta c)} < 0$, $f^*$ is the global minimizer of $R_{\mathcal{L}_q}(f)$, and $\hat{f}$ is the global minimizer of $R_{\mathcal{L}_q}^{\eta}(f)$. The larger the $q$, the larger the constant $A$, and the tighter the bound of Eq.~\ref{eq:bounds2}. 
In the extreme case of $q = 1$ (i.e., for MAE), $A = 0$ and $ R_{\mathcal{L}_q}(\hat{f}) = R_{\mathcal{L}_q}(f^*)$. 
In other words, for $q$ values approaching 1, the optimum of the noisy risk will yield a risk value (on the clean data) that is close to $f^*$, which implies noise tolerance. 
It can also be shown that the difference $(R_{\mathcal{L}_q}^{\eta}(f^*) - R_{\mathcal{L}_q}^{\eta}(\hat{f}))$ is bounded under class dependent noise, provided $R_{\mathcal{L}_q}(f^*) = 0$ and $q_{ij} < q_{ii}$ $\forall i \neq j$ (see Thm 2 in Appendix).

The compromise on noise-robustness when using $\mathcal{L}_q$ over MAE prompts an easier learning process. 
Let's look at the gradients of $\mathcal{L}_{q}$ loss to see this:
\begin{align}
\frac{\partial \mathcal{L}_q(f(\boldsymbol{x}_i ; \boldsymbol{\theta}), y_i)}{\partial \boldsymbol{\theta}} =  f_{y_i}(\boldsymbol{x}_i ; \boldsymbol{\theta})^q ( -\frac{1}{f_{y_i}(\boldsymbol{x}_i ; \boldsymbol{\theta})}\nabla_{\boldsymbol{\theta}} f_{y_i}(\boldsymbol{x}_i ; \boldsymbol{\theta}))= - f_{y_i}(\boldsymbol{x}_i ; \boldsymbol{\theta})^{q-1} \nabla_{\boldsymbol{\theta}} f_{y_i}(\boldsymbol{x}_i ; \boldsymbol{\theta}) , \nonumber
\end{align}
where $f_{y_i}(\boldsymbol{x}_i ; \boldsymbol{\theta}) \in [0,1]$ $\forall$ $i$ and $q \in (0,1)$. 
Thus, relative to CCE, $\mathcal{L}_q$ loss weighs each sample by an additional $f_{y_i}(\boldsymbol{x}_i ; \boldsymbol{\theta})^q$ so that less emphasis is put on samples with weak agreement between softmax outputs and the labels, which should improve robustness against noise. 
Relative to MAE, a weighting of $f_{y_i}(\boldsymbol{x}_i ; \boldsymbol{\theta})^{q-1}$ on each sample can facilitate learning by giving more attention to challenging datapoints with labels that do not agree with the softmax outputs. 
On one hand, larger $q$ leads to a more noise-robust loss function. 
On the other hand, too large of a $q$ can make optimization strenuous. 
Hence, as we will demonstrate empirically below, it is practically useful to set $q$ between $0$ and $1$, where a tradeoff equilibrium is achieved between noise-robustness and better learning dynamics. 

\subsection{Truncated $\mathcal{L}_q$ Loss}
\label{trunc_loss}
Since a tighter bound in $\sum ^c _{j = 1} \mathcal{L}(f(\boldsymbol{x},j))$ would imply stronger noise tolerance, we propose the truncated $\mathcal{L}_q$ loss:
\begin{align}
\mathcal{L}_{trunc}(f(\boldsymbol{x}), \boldsymbol{e}_j) = 
\begin{cases}
\mathcal{L}_q(k) & \text{if  } f_j(\boldsymbol{x}) \leq k \\
\mathcal{L}_q(f(\boldsymbol{x}), \boldsymbol{e}_j) & \text{if  } f_j(\boldsymbol{x}) > k 
\end{cases}
\end{align}
where $0 < k < 1$, and $\mathcal{L}_q(k) = (1-k^q)/q$. Note that, when $k \rightarrow 0$, the truncated $\mathcal{L}_q$ loss becomes the normal $\mathcal{L}_q$ loss.  Assuming $ k \geq 1/c$, the sum of truncated $\mathcal{L}_q$ loss with respect to all classes is bounded by (see Appendix):
\begin{align}
d\mathcal{L}_{q}(\frac{1}{d}) + (c-d)\mathcal{L}_{q}(k) \leq \sum^c_{j = 1} \mathcal{L}_{trunc}(f(\boldsymbol{x}), \boldsymbol{e}_j) \leq c \mathcal{L}_q(k),
\label{eq:trunc_bound}
\end{align}
where $d = \max (1, \frac{(1-q)^{1/q}}{k})$. It can be verified that the difference between upper and lower bounds for the truncated $\mathcal{L}_q$ loss, $\mathcal{L}_q(k)$, is smaller than that for the $\mathcal{L}_q$ loss of Eq.~\ref{eq:bounds}, if 
\begin{align}
d[\mathcal{L}_{q}(k) - \mathcal{L}_{q}(\frac{1}{d})] < \frac{c^{(1-q)} - 1}{q}.
\end{align}
As an example, when $k \geq 0.3$, the above inequality is satisfied for all $q$ and $c$. When $k \geq 0.2$, the inequality is satisfied for all q and $c \geq 10$. Since the derived bounds in Eq.~\ref{eq:bounds} and Eq.~\ref{eq:trunc_bound} are tight, introducing the threshold $k$ can thus lead to a more noise tolerant loss function.

If the softmax output for the provided label is below a threshold, truncated $\mathcal{L}_q$ loss becomes a constant. 
Thus, the loss gradient is zero for that sample, and it does not contribute to learning dynamics. 
While Eq.~\ref{eq:trunc_bound} suggests that a larger threshold $k$ leads to tighter bounds and hence more noise-robustness, too large of a threshold would precipitate too many discarded samples for training. 
Ideally, we would want the algorithm to train with all available clean data and ignore noisy labels. 
Thus the optimal choice of $k$ would depend on the noise in the labels.
Hence, $k$ can be treated as a (bounded) hyper-parameter and optimized.
In our experiments, we set $k=0.5$ that yields a tighter bound for truncated $\mathcal{L}_q$ loss, and which we observed to work well empirically.

A potential problem arises when training directly with this loss function. When the threshold is relatively large (e.g., $k = 0.5$ in a $10$-class classification problem), at the beginning of the training phase, most of the softmax outputs can be significantly smaller than $k$, resulting in a dramatic drop in the number of effective samples. 
Moreover, it is suboptimal to prune samples based on softmax values at the beginning of training. 
To circumvent the problem, observe that, by definition of the truncated $\mathcal{L}_q$ loss:
\begin{align}
\argmin_{\boldsymbol{\theta}}  \sum_{i=1}^n \mathcal{L}_{trunc}(f(\boldsymbol{x}_i ; \boldsymbol{\theta}), y_i) &= \argmin_{\boldsymbol{\theta}} \sum_{i=1}^n v_i \mathcal{L}_{q}(f(\boldsymbol{x}_i ; \boldsymbol{\theta}), y_i) + (1 - v_i)\mathcal{L}_q(k),
\end{align}
where $v_i = 0$ if $f_{y_i}(\textbf{x}_i) \leq k$ and $v_i = 1$ otherwise, and $\boldsymbol{\theta}$ represents the parameters of the classifier. 
Optimizing the above loss is the same as optimizing the following:
\begin{align}\label{eq:trunc_obj}
\argmin_{\boldsymbol{\theta}} \sum_{i=1}^n v_i \mathcal{L}_{q}(f(\boldsymbol{x}_i ; \boldsymbol{\theta}), y_i) - v_i \mathcal{L}_q(k) &= \argmin_{\boldsymbol{\theta},\boldsymbol{w} \in [0,1]^n} \sum_{i=1}^n w_i \mathcal{L}_{q}(f(\boldsymbol{x}_i ; \boldsymbol{\theta}), y_i)  - \mathcal{L}_q(k) \sum_{i=1}^n w_i,
\end{align}
because for any $\boldsymbol{\theta}$, the optimal $w_i$ is $1$ if $\mathcal{L}_{q}(f(\boldsymbol{x}_i ; \boldsymbol{\theta}), y_i) \leq \mathcal{L}_q(k)$ and $0$ if $\mathcal{L}_{q}(f(\boldsymbol{x}_i ; \boldsymbol{\theta}), y_i) > \mathcal{L}_q(k)$.
Hence, we can optimize the truncated $\mathcal{L}_q$ loss by optimizing the right hand side of Eq.~\ref{eq:trunc_obj}. If $\mathcal{L}_{q}$ is convex with respect to the parameters $\boldsymbol{\theta}$, optimizing Eq.~\ref{eq:trunc_obj} is a \textit{biconvex optimization} problem, and the alternative convex search (ACS) algorithm \cite{bazaraa2013nonlinear} can be used to find the global minimum. ACS iteratively optimizes $\boldsymbol{\theta}$ and $\boldsymbol{w}$ while keeping the other set of parameters fixed. 
Despite the high non-convexity of DNNs, we can apply ACS to find a local minimum. We refer to the update of $\boldsymbol{w}$ as "pruning". At every step of iteration, pruning can be carried out easily by computing $f(\boldsymbol{x}_i ; \boldsymbol{\theta}^{(t)})$ for all training samples. Only samples with $f_{y_i}(\boldsymbol{x}_i ; \boldsymbol{\theta}^{(t)}) \geq k$ and $\mathcal{L}_{q}(f(\boldsymbol{x}_i ; \boldsymbol{\theta}), y_i) \leq \mathcal{L}_q(k)$ are kept for updating $\boldsymbol{\theta}$ during that iteration (and hence $w_i = 1$ ). The additional computational complexity from the pruning steps is negligible. Interestingly, the resulting algorithm is similar to that of self-paced learning \cite{kumar2010self}. 
\begin{algorithm}
\caption{ACS for Training with $\mathcal{L}_q$ Loss}\label{euclid}
\algorithmicrequire{Noisy dataset $D_{\eta}$, total iterations $T$, threshold $k$}
\begin{algorithmic}
\State Initialize $ w_i^{(0)} = 1 \; \forall \; i $ 
\State Update $ \boldsymbol{\theta}^{(0)} = \argmin_{\boldsymbol{\theta}} \sum_{i=1}^n w_i^{(0)} \mathcal{L}_{q}(f(\boldsymbol{x}_i ; \boldsymbol{\theta}), y_i) - \mathcal{L}_q(k) \sum_{i=1}^n w_i^{(0)} $
\While{$t < T$}
\State Update $\boldsymbol{w}^{(t)} = \argmin_{\boldsymbol{w}} \sum_{i=1}^n w_i \mathcal{L}_{q}(f(\boldsymbol{x}_i ; \boldsymbol{\theta}^{(t-1)}), y_i) - \mathcal{L}_q(k) \sum_{i=1}^n w_i$ [Pruning Step]
\State Update $ \boldsymbol{\theta}^{(t)} = \argmin_{\boldsymbol{\theta}} \sum_{i=1}^n w_i^{(t)} \mathcal{L}_{q}(f(\boldsymbol{x}_i ; \boldsymbol{\theta}), y_i) - \mathcal{L}_q(k) \sum_{i=1}^n w_i^{(t)} $
\EndWhile
\end{algorithmic}
\algorithmicensure{$\boldsymbol{\theta}^{(T)}$}
\end{algorithm}

\section{Experiments}\label{sec_4}
The following setup applies to all of the experiments conducted. Noisy datasets were produced by artificially corrupting true labels. $10\%$ of the training data was retained for validation. To realistically mimic a noisy dataset while justifiably analyzing the performance of the proposed loss function, only the training and validation data were contaminated, and test accuracies were computed with respect to true labels. A mini-batch size of 128 was used. All networks used ReLUs in the hidden layers and softmax layers at the output. All reported experiments were repeated five times with random initialization of neural network parameters and randomly generated noisy labels each time. We compared the proposed functions with CCE, MAE and also the confusion matrix-corrected CCE, as shown in Eq.~\ref{eq:1}. Following \cite{patrini2017making}, we term this "forward correction". All experiments were conducted with identical optimization procedures and architectures, changing only the loss functions.  

\subsection{Toward a Better Understanding of $\mathcal{L}_q$ Loss}
\label{Lq_exp}
To better grasp the behavior of $\mathcal{L}_q$ loss, we implemented different values of $q$ and uniform noise at different noise levels, and trained ResNet-34 with the default setting of Adam on CIFAR-10. 
As shown in Fig.~\ref{Lq_diff_q}, when trained on clean dataset, increasing $q$ not only slowed down the rate of convergence, but also lowered the classification accuracy. 
More interesting phenomena appeared when trained on noisy data. When CCE ($q=0$) was used, the classifier first learned predictive patterns, presumably from the noise-free labels, before overfitting strongly to the noisy labels, in agreement with Arpit et al.'s observations~\cite{arpit2017closer}. 
Training with increased $q$ values delayed overfitting and attained higher classification accuracies. 
One interpretation of this behavior is that the classifier could learn more about predictive features before overfitting.
This interpretation is supported by our plot of the average softmax values with respect to the correctly and wrongly labeled samples on the training set for CCE and $\mathcal{L}_q$ $(q = 0.7)$ loss, and with $40 \%$ uniform noise (Fig.~\ref{cce_mae}(c)). 
For CCE, the average softmax for wrongly labeled samples remained small at the beginning, but grew quickly when the model started overfitting. 
$\mathcal{L}_q$ loss, on the other hand, resulted in significantly smaller softmax values for wrongly labeled data.
This observation further serves as an empirical justification for the use of truncated $\mathcal{L}_q$ loss as described in section~\ref{trunc_loss}.

We also observed that there was a threshold of $q$ beyond which overfitting never kicked in before convergence. 
When $\eta = 0.2$ for instance, training with $\mathcal{L}_q$ loss with $q = 0.8$ produced an overfitting-free training process. Empirically, we noted that, the noisier the data, the larger this threshold is. 
However, too large of a $q$ hampers the classification accuracy, and thus a larger $q$ is not always preferred. 
In general, $q$ can be treated as a hyper-parameter that can be optimized, say via monitoring validation accuracy.
In remaining experiments, we used $q = 0.7$, which yielded a good compromise between fast convergence and noise robustness (no overfitting was observed for $\eta \leq 0.5$). 
\begin{figure}
\centering
\begin{subfigure}{.31\textwidth}
\centering
\includegraphics[width=1.0\linewidth]{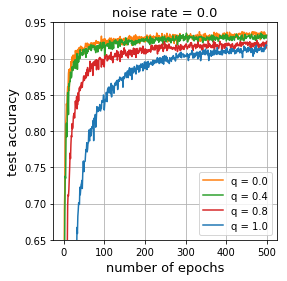}
\caption{}
\end{subfigure}
\begin{subfigure}{.31\textwidth}
\centering
\includegraphics[width=1.0\linewidth]{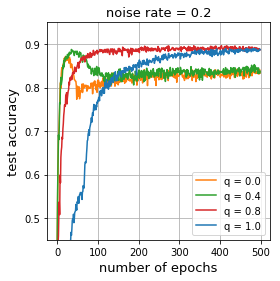}
\caption{}
\end{subfigure}
\begin{subfigure}{.31\textwidth}
\centering
\includegraphics[width=1.0\linewidth]{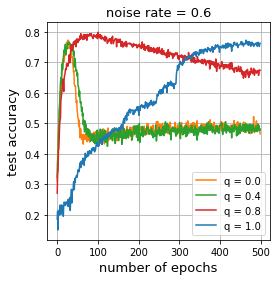}
\caption{}
\end{subfigure}
\begin{subfigure}{.31\textwidth}
\centering
\includegraphics[width=1.0\linewidth]{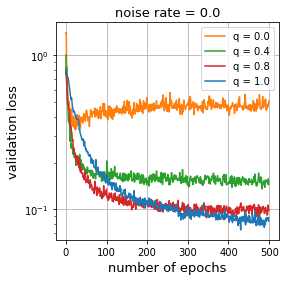}
\caption{}
\end{subfigure}
\begin{subfigure}{.31\textwidth}
\centering
\includegraphics[width=1.0\linewidth]{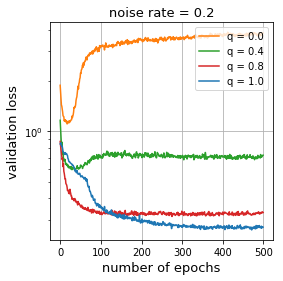}
\caption{}
\end{subfigure}
\begin{subfigure}{.31\textwidth}
\centering
\includegraphics[width=1.0\linewidth]{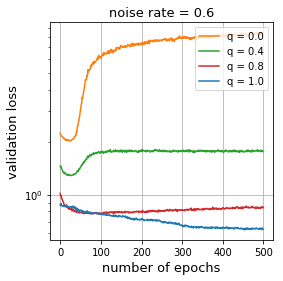}
\caption{}
\end{subfigure}
\caption{The test accuracy and validation loss against number of epochs for training with $\mathcal{L}_q$ loss at different values of $q$. (a) and (d): $\eta = 0.0$; (b) and (e): $\eta = 0.2$; (c) and (f): $\eta = 0.6$. }
\label{Lq_diff_q}
\end{figure}
\begin{table}[ht]
\caption{Average test accuracy and standard deviation (5 runs) on experiments with closed-set noise. We report accuracies of the epoch where validation accuracy is maximum. Forward $T$ and $\hat{T}$ represent forward correction with the true and estimated confusion matrices, respectively~\cite{patrini2017making}. $q = 0.7$ was used for all experiments with $\mathcal{L}_q$ loss and truncated $\mathcal{L}_q$ loss. Best 2 accuracies are \textbf{bold faced}.}
\centering
\begin{adjustbox}{width=1\textwidth}
\small
\begin{tabular}{c|c|cccc|cccc}
\hline
\multirow{3}{*}{Datasets} & \multirow{3}{*}{Loss Functions} & \multicolumn{4}{c}{Uniform Noise} & \multicolumn{4}{c}{Class Dependent Noise} \\ \cline{3-10} 
 &  & \multicolumn{4}{c}{Noise Rate $\eta$} & \multicolumn{4}{c}{Noise Rate $\eta$} \\
& & 0.2 & 0.4 & 0.6 & 0.8 & 0.1 & 0.2 & 0.3 & 0.4 \\ \hline \hline
\multirow{6}{*}{\begin{tabular}[c]{@{}c@{}}FASHION\\MNIST\\ \end{tabular}} & CCE & $ 93.24 \pm 0.12$ & $92.09 \pm 0.18$ & $ 90.29 \pm 0.35 $ & $ 86.20 \pm 0.68$ & $ 94.06 \pm 0.05$ & $93.72 \pm 0.14$ & $ 92.72 \pm 0.21$ & $ 89.82 \pm 0.31$ \\
 & MAE & $80.39 \pm 4.68$ & $ 79.30 \pm 6.20$ & $ 82.41 \pm 5.29$ & $  74.73 \pm 5.26$ & $ 74.03 \pm 6.32$ & $ 63.03 \pm 3.91$ & $ 58.14 \pm 0.14 $ & $56.04 \pm 3.76$ \\
 & Forward $T$ & $ \boldsymbol{ 93.64 \pm 0.12}$ & $ \boldsymbol{ 92.69 \pm 0.20}$ & $ \boldsymbol{91.16 \pm 0.16}$ & $87.59 \pm 0.35$ & $ \boldsymbol{94.33 \pm 0.10}$ & $\boldsymbol{94.03 \pm 0.11}$ & $ \boldsymbol{93.91 \pm 0.14}$ & $\boldsymbol{93.65 \pm 0.11}$ \\
 & Forward $\hat{T}$ & $ 93.26\pm 0.10$ & $ 92.24 \pm 0.15$ & $ 90.54 \pm 0.10$ & $85.57 \pm 0.86$ & $ \boldsymbol{94.09 \pm 0.10}$ & $\boldsymbol{93.66 \pm 0.09}$ & $ \boldsymbol{93.52 \pm 0.16}$ & $ 88.53 \pm 4.81$ \\
 & $\mathcal{L}_q$ & $ \boldsymbol{93.35 \pm 0.09}$ & $ 92.58 \pm 0.11$ & $ 91.30 \pm 0.20$ & $ \boldsymbol{88.01 \pm 0.22}$ & $93.51 \pm 0.17$ & $93.24 \pm 0.14$ & $ 92.21 \pm 0.27$ & $ 89.53 \pm 0.53$  \\
& Trunc $\mathcal{L}_q$ & $ 93.21 \pm 0.05$ & $ \boldsymbol{92.60 \pm 0.17}$ & $ \boldsymbol{ 91.56 \pm 0.16}$ & $ \boldsymbol{88.33  \pm 0.38}$ & $ 93.53 \pm 0.11$ & $ 93.36 \pm 0.07$ & $ 92.76 \pm 0.14$ & $ \boldsymbol{91.62 \pm 0.34}$ \\\hline
\hline
\multirow{6}{*}{CIFAR-10} & CCE & 86.98 $\pm$ 0.44 & 81.88 $\pm$ 0.29 & 74.14 $\pm$ 0.56 & 53.82 $\pm$ 1.04 & 90.69 $\pm$ 0.17 & 88.59 $\pm$ 0.34 & 86.14 $\pm$ 0.40 & 80.11 $\pm$1.44 \\
 & MAE & 83.72 $\pm$ 3.84 & 67.00 $\pm$ 4.45 & 64.21 $\pm$ 5.28 & 38.63 $\pm$ 2.62 & 82.61 $\pm$ 4.81 & 52.93 $\pm$ 3.60 & 50.36 $\pm$ 5.55 & 45.52 $\pm$ 0.13 \\
 & Forward $T$ & 88.63 $\pm$ 0.14 & 85.07 $\pm$ 0.29 & 79.12 $\pm$ 0.64 & $\boldsymbol{64.30 \pm 0.70}$ & $\boldsymbol{91.32 \pm 0.21}$ & $\boldsymbol{90.35 \pm 0.26}$ & $\boldsymbol{89.25 \pm 0.43}$ & $\boldsymbol{88.12 \pm 0.32}$ \\
  & Forward $\hat{T}$ & $87.99 \pm 0.36$ & $83.25 \pm 0.38$ & $74.96 \pm 0.65$ & $54.64 \pm 0.44$ & $90.52 \pm 0.26$ & $89.09 \pm 0.47$ & $86.79 \pm 0.36$ & $\boldsymbol{83.55 \pm 0.58}$ \\
 & $\mathcal{L}_q$ & $\boldsymbol{89.83 \pm 0.20}$ & $\boldsymbol{87.13 \pm 0.22}$ & $\boldsymbol{82.54 \pm 0.23}$ & $64.07 \pm 1.38$ & $\boldsymbol{90.91 \pm 0.22}$ & $89.33 \pm 0.17$ & $85.45 \pm 0.74$ & 76.74$\pm$ 0.61 \\
 & \begin{tabular}[c]{@{}c@{}}Trunc $\mathcal{L}_q$ \end{tabular} & $\boldsymbol{89.7 \pm 0.11}$ & $\boldsymbol{87.62 \pm 0.26}$ & $\boldsymbol{82.70 \pm 0.23}$ & $\boldsymbol{67.92 \pm 0.60}$ & $90.43 \pm 0.25$  & $\boldsymbol{89.45 \pm 0.29}$ & $\boldsymbol{87.10 \pm 0.22}$ & $82.28 \pm 0.67$ \\ \hline
\hline
\multirow{6}{*}{CIFAR-100} & CCE & 58.72 $\pm$ 0.26 & 48.20 $\pm$ 0.65 & 37.41 $\pm$ 0.94 & 18.10 $\pm$ 0.82 & $66.54 \pm 0.42 $ & $59.20 \pm 0.18$ & $51.40 \pm 0.16 $ & $42.74 \pm 0.61 $ \\
 & MAE & 15.80 $\pm$ 1.38 & 9.03 $\pm$ 1.54 & 7.74 $\pm$ 1.48 & 3.76 $\pm$ 0.27 & $13.38 \pm 1.84$ & $11.50 \pm 1.16 $ & $8.91 \pm 0.89 $ & $8.20 \pm 1.04 $ \\
 & Forward $T$ & 63.16 $\pm$ 0.37 & 54.65 $\pm$ 0.88 & 44.62 $\pm$ 0.82 & 24.83 $\pm$ 0.71 & $\boldsymbol{71.05 \pm 0.30}$ & $\boldsymbol{71.08 \pm 0.22}$ & $\boldsymbol{70.76 \pm 0.26}$ & $\boldsymbol{70.82 \pm 0.45}$ \\
 & Forward $\hat{T}$ & $39.19 \pm 2.61$ & $31.05 \pm 1.44$ & $19.12 \pm 1.95$ & $8.99 \pm 0.58$ & $45.96 \pm 1.21 $ & $42.46 \pm 2.16$ &$38.13 \pm 2.97$  & $34.44 \pm 1.93$ \\
 & $\mathcal{L}_q$ & $\boldsymbol{66.81 \pm 0.42}$ & $\boldsymbol{61.77 \pm 0.24}$ & $\boldsymbol{53.16 \pm 0.78}$ & $\boldsymbol{29.16 \pm 0.74}$ & $68.36 \pm 0.42$ & $66.59 \pm 0.22$ & $61.45 \pm 0.26$ & $47.22 \pm 1.15$ \\
 & Trunc $\mathcal{L}_q$ & $\boldsymbol{67.61 \pm 0.18}$ & $\boldsymbol{62.64 \pm 0.33}$ & $\boldsymbol{54.04 \pm 0.56}$ & $\boldsymbol{29.60 \pm 0.51}$ & $\boldsymbol{68.86 \pm 0.14} $ & $\boldsymbol{66.59 \pm 0.23}$ & $\boldsymbol{61.87 \pm 0.39}$ & $\boldsymbol{47.66 \pm 0.69}$ 
\\ \hline
\end{tabular}
\end{adjustbox}
\label{result_table}
\end{table}
\subsection{Datasets}
\textbf{CIFAR-10/CIFAR-100}:
ResNet-34 was used as the classifier optimized with the loss functions mentioned above. Per-pixel mean subtraction, horizontal random flip and $32 \times 32$ random crops after padding with 4 pixels on each side was performed as data preprocessing and augmentation. Following \cite{huang2016deep}, we used stochastic gradient descent (SGD) with $0.9$ momentum, a weight decay of $10^{-4}$ and learning rate of $0.01$, and divided it by 10 after 40 and 80 epochs (120 in total) for CIFAR-10, and after 80 and 120 (150 in total) for CIFAR-100. To ensure a fair comparison, the identical optimization scheme was used for truncated $\mathcal{L}_q$ loss. We trained with the entire dataset for the first 40 epochs for CIFAR-10 and 80 for CIFAR-100, and started pruning and training with the pruned dataset afterwards. Pruning was done every 10 epochs. To prevent overfitting, we used the model at the optimal epoch based on maximum validation accuracy for pruning. Uniform noise was generated by mapping a true label to a random label through uniform sampling. Following Patrini, et al. \cite{patrini2017making} class dependent noise was generated by mapping TRUCK $\rightarrow$ AUTOMOBILE, BIRD $\rightarrow$ AIRPLANE, DEER $\rightarrow$ HORSE, and CAT\ $\leftrightarrow$ DOG with probability $\eta$ for CIFAR-10. For CIFAR-100, we simulated class-dependent noise by flipping each class into the next circularly with probability $\eta$.

We also tested noise-robustness of our loss function on open-set noise using CIFAR-10. For a direct comparison, we followed the identical setup as described in \cite{wang2018iterative}. For this experiment, the classifier was trained for only 100 epochs. We observed validation loss plateaued after about 10 epochs, and hence started pruning the data afterwards at 10-epoch intervals. The open-set noise was generated by using images from the CIFAR-100 dataset. A random CIFAR-10 label was assigned to these images. 

\textbf{FASHION-MNIST}: ResNet-18 was used. The identical data preprocessing, augmentation, and optimization  procedure as in CIFAR-10 was deployed for training. To generate a realistic class dependent noise, we used the t-SNE \cite{maaten2008visualizing} plot of the dataset to associated classes with similar embeddings, and mapped BOOT $\rightarrow$ SNEAKER , SNEAKER $\rightarrow$ SANDALS, PULLOVER $\rightarrow$ SHIRT, COAT $\leftrightarrow$ DRESS with probability $\eta$. 
\begin{table}[]
\caption{Average test accuracy on experiments with CIFAR-10. We replicated the exact experimental setup as in \cite{wang2018iterative}. The reported accuracies are the average last epoch accuracies after training for 100 epochs. $\eta = 40 \%$. CCE, Forward and method by Wang et al. are adapted for direct comparison.}
\centering
\begin{adjustbox}{width=1.0\textwidth}
\label{my-label}
\begin{tabular}{l|cccccc}
\hline
Noise type & \multicolumn{1}{l}{CCE \cite{wang2018iterative}} & \multicolumn{1}{l}{Forward \cite{wang2018iterative}} & \multicolumn{1}{l}{Wang, et al. \cite{wang2018iterative}} & MAE & \multicolumn{1}{l}{$\mathcal{L}_q$} & \multicolumn{1}{l}{Trunc $\mathcal{L}_q$} \\ \hline \hline
CIFAR-10 + CIFAR-100 (open-set noise) & 62.92 & 64.18 & 79.28 & 75.06 & 71.10 & $\boldsymbol{79.55}$ \\ \hline
CIFAR-10 (closed-set noise) & 62.38 & 77.81 & 78.15 & 74.31 & 64.79 & $\boldsymbol{79.12}$ \\ \hline
\end{tabular}
\end{adjustbox}
\label{open-set}
\end{table}
\subsection{Results and Discussion}
Experimental results with closed-set noise is summarized in Table \ref{result_table}. For uniform noise, proposed loss functions outperformed the baselines significantly, including forward correction with the ground truth confusion matrices. 
In agreement with our theoretical expectations, truncating the $\mathcal{L}_q$ loss enhanced results. 
For class dependent noise, in general Forward $T$ offered the best performance, as it relied on the knowledge of the ground truth confusion matrix. 
Truncated $\mathcal{L}_q$ loss produced similar accuracies as Forward $\hat{T}$ for FASHION-MNIST and better results for CIFAR datasets, and outperformed the other baselines at most noise levels for all datasets.
While using $\mathcal{L}_q$ loss improved over baselines for CIFAR-100, no improvements were observed for FASHION-MNIST and CIFAR-10 datasets. 
We believe this is in part because very similar classes were grouped together for the confusion matrices and consquently the DNNs might falsely put high confidence on wrongly labeled samples. 

In general, classification accuracy for both uniform and class dependent noise would be further improved relative to baselines with optimized $q$ and $k$ values and more number of epochs.
Based on the experimental results, we believe the proposed approach would work well when correctly labeled data can be differentiated from wrongly labeled data based on softmax outputs, which is often the case with large-scale data and expressive models.
We also observed that MAE performed poorly for all datasets at all noise levels, presumably because DNNs like ResNet struggled to optimize with MAE loss, especially on challenging datasets such as CIFAR-100.

Table \ref{open-set} summarizes the results for open-set noise with CIFAR-10. Following Wang et al.~\cite{wang2018iterative}, we reported the last-epoch test accuracy after training for 100 epochs. 
We also repeated the closed-set noise experiment with their setup. Using $\mathcal{L}_q$ loss noticeably prevented overfitting, and using truncated $\mathcal{L}_q$ loss achieved better results than the state-of-the-art method for open-set noise reported in~\cite{wang2018iterative}. Moreover, our method is significantly easier to implement. Lastly, note that the poor performance of $\mathcal{L}_q$ loss compared to MAE is due to the fact that test accuracy reported here is long after the model started overfitting, since a shallow CNN without data augmentation was deployed for this experiment.

\section{Conclusion}\label{sec_5}
In conclusion, we proposed theoretically grounded and easy-to-use classes of noise-robust loss functions, the $\mathcal{L}_q$ loss and the truncated $\mathcal{L}_q$ loss, for classification with noisy labels that can be employed with any existing DNN algorithm. We empirically verified noise robustness on various datasets with both closed- and open-set noise scenarios.

\subsubsection*{Acknowledgments}
This work was supported by NIH R01 grants (R01LM012719 and R01AG053949), the NSF NeuroNex grant 1707312, and NSF CAREER grant (1748377).

\bibliography{reference}
\bibliographystyle{plain}

\newpage
\newtheorem{lemma}{Lemma}
\newtheorem{theorem}{Theorem}
\newtheorem*{remark}{Remark}

\section*{Appendix}
\begin{lemma}
$\lim_{q \to 0} \mathcal{L}_{q}(f(\boldsymbol{x}), \boldsymbol{e}_j) = \mathcal{L}_{C}(f(\boldsymbol{x}), \boldsymbol{e}_j)$, where $\mathcal{L}_{q}$ represents the $\mathcal{L}_q$ loss, and $\mathcal{L}_{C}$ represents the categorical cross entropy loss.  
\end{lemma}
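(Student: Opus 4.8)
The plan is to reduce the statement to a single-variable limit and evaluate it by L'H\^opital's rule, exactly as hinted in the main text. First I would write out both sides explicitly: for the one-hot target $\boldsymbol{e}_j$ the proposed loss is $\mathcal{L}_{q}(f(\boldsymbol{x}), \boldsymbol{e}_j) = (1 - f_j(\boldsymbol{x})^q)/q$ by Eq.~\ref{eq:Lq}, while the categorical cross entropy reduces to $\mathcal{L}_{C}(f(\boldsymbol{x}), \boldsymbol{e}_j) = -\log f_j(\boldsymbol{x})$. So the goal collapses to showing $\lim_{q \to 0} (1 - f_j(\boldsymbol{x})^q)/q = -\log f_j(\boldsymbol{x})$ for each fixed $\boldsymbol{x}$, treating $f_j(\boldsymbol{x})$ as a constant $p \in (0,1]$ (strictly positive because the output layer is a softmax).

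Next I would observe that as $q \to 0$ the numerator $1 - p^q \to 1 - 1 = 0$ and the denominator $q \to 0$, so the quotient is an indeterminate form of type $0/0$, which licenses L'H\^opital's rule in the variable $q$. Rewriting $p^q = e^{q \log p}$ gives $\frac{d}{dq}(1 - p^q) = -\log p \cdot e^{q \log p} = -p^q \log p$, while the denominator differentiates to $1$. Hence the limit equals $\lim_{q \to 0} (-p^q \log p) = -\log p$, since $p^0 = 1$. Substituting $p = f_j(\boldsymbol{x})$ recovers $-\log f_j(\boldsymbol{x}) = \mathcal{L}_{C}(f(\boldsymbol{x}), \boldsymbol{e}_j)$, completing the argument.

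There is essentially no hard step here; the only point worth flagging is the domain condition. The manipulation requires $f_j(\boldsymbol{x}) > 0$ so that $\log f_j(\boldsymbol{x})$ is finite and the exponential rewriting $p^q = e^{q\log p}$ is valid. This is guaranteed by the softmax assumption stated in the preliminaries, so no separate case analysis is needed. As an alternative to L'H\^opital, one could instead expand $p^q = 1 + q\log p + O(q^2)$ and read the limit off the linear term directly, which avoids even invoking the rule; I would keep L'H\^opital as the primary route since the text already advertises it.
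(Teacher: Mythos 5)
Your proof is correct and follows essentially the same route as the paper's: identify the $0/0$ form in $q$, apply L'H\^opital's rule via $\frac{d}{dq}(1 - f_j(\boldsymbol{x})^q) = -f_j(\boldsymbol{x})^q \log f_j(\boldsymbol{x})$, and evaluate at $q = 0$. Your explicit check of the indeterminate form and the positivity condition $f_j(\boldsymbol{x}) > 0$ is a welcome bit of care that the paper leaves implicit.
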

\begin{proof}
from equation \ref{eq:Lq}, and using L'Hôpital's rule,
\begin{align*}
\lim_{q \to 0} \mathcal{L}_{q}(f(\boldsymbol{x}), \boldsymbol{e}_j) &= \lim_{q \to 0}
\frac{(1-f_j(\boldsymbol{x})^q)}{q} = \lim_{q \to 0}
\frac{\frac{d}{dq}(1-f_j(\boldsymbol{x})^q)}{\frac{d}{dq}q} \\
&= \lim_{q \to 0} -f_j(\boldsymbol{x})^q \log (f_j(\boldsymbol{x})) =  -\log (f_j(\boldsymbol{x})) = \mathcal{L}_{C}(f(\boldsymbol{x}), \boldsymbol{e}_j).
\end{align*}
\end{proof}

\begin{lemma}
For any $\boldsymbol{x}$ and $q \in (0,1]$, the sum of $\mathcal{L}_q$ loss with respect to all classes is bounded by:
\begin{align}
\frac{c-c^{(1-q)}}{q} \leq \sum^c_{j=1} \frac{(1-f_j(\boldsymbol{x})^q)}{q} \leq \frac{c-1}{q}.
\end{align}
\end{lemma}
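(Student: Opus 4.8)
The plan is to reduce the two-sided bound to an equivalent statement about $\sum_{j=1}^c f_j(\boldsymbol{x})^q$ and then bound this quantity from both sides using only the softmax constraints $\sum_{j=1}^c f_j(\boldsymbol{x}) = 1$ and $0 \le f_j(\boldsymbol{x}) \le 1$. First I would factor out $1/q$ and write $\sum_{j=1}^c \mathcal{L}_q(f(\boldsymbol{x}),\boldsymbol{e}_j) = \frac{1}{q}\bigl(c - \sum_{j=1}^c f_j(\boldsymbol{x})^q\bigr)$. Since $q>0$, multiplying the claimed inequality through by $q$ and rearranging shows it is equivalent to $1 \le \sum_{j=1}^c f_j(\boldsymbol{x})^q \le c^{1-q}$. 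The remainder of the argument simply establishes these two bounds.

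For the lower bound, I would use that for any $t \in [0,1]$ and $q \in (0,1]$ one has $t^q \ge t$, since raising a number in the unit interval to a power no larger than one does not decrease it. Applying this coordinatewise and summing gives $\sum_{j=1}^c f_j(\boldsymbol{x})^q \ge \sum_{j=1}^c f_j(\boldsymbol{x}) = 1$, with equality precisely when $f(\boldsymbol{x})$ is a one-hot vector. Translating back through the reduction, this is exactly the upper bound $\frac{c-1}{q}$ on the loss sum.

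For the upper bound, I would exploit the concavity of $t \mapsto t^q$ on $[0,\infty)$ for $q \in (0,1]$ and apply Jensen's inequality to the uniform average of the coordinates: $\frac{1}{c}\sum_{j=1}^c f_j(\boldsymbol{x})^q \le \bigl(\frac{1}{c}\sum_{j=1}^c f_j(\boldsymbol{x})\bigr)^q = c^{-q}$. Multiplying by $c$ yields $\sum_{j=1}^c f_j(\boldsymbol{x})^q \le c^{1-q}$, with equality exactly at the uniform prediction $f_j(\boldsymbol{x}) = 1/c$ for all $j$; this corresponds to the lower bound $\frac{c-c^{(1-q)}}{q}$ on the loss sum.

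There is essentially no serious obstacle here: once the reduction is made, both directions follow from elementary inequalities, and the only point requiring care is to confirm that equality is attained in each case (at a one-hot vector for one bound and at the uniform vector for the other). This \emph{tightness} is the feature the paper relies on afterward, when it argues that shrinking the gap between the two bounds yields a more noise-tolerant loss.
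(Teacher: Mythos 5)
Your proof is correct and takes essentially the same route as the paper's: the upper bound via $t^q \ge t$ for $t \in [0,1]$, $q \in (0,1]$ is identical, and your Jensen/concavity step supplies a justification for the inequality $\sum_{j=1}^c f_j(\boldsymbol{x})^q \le \sum_{j=1}^c (1/c)^q = c^{1-q}$, which the paper merely asserts. Your equality observations (one-hot for one bound, uniform for the other) are accurate and correctly identify the tightness the paper later relies on.
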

\begin{proof}
Observe that, since we have a softmax layer at the end, $f_j(\boldsymbol{x}) \leq 1$ for all $j$, and $\sum^c_{j=1} f_j(\boldsymbol{x}) = 1$. Now, since $ q \in (0,1]$, we have $f_j(\boldsymbol{x}) \leq f_j(\boldsymbol{x})^q$, and $(1 - f_j(\boldsymbol{x})) \geq (1 - f_j(\boldsymbol{x})^q)$. Hence, 
\begin{align*}
\sum^c_{j=1} \frac{(1-f_j(\boldsymbol{x})^q)}{q} \leq \sum^c_{j=1} \frac{(1-f_j(\boldsymbol{x}))}{q} = \frac{c-\sum^c_{j=1} f_j(\boldsymbol{x})}{q} = \frac{c-1}{q}. 
\end{align*}
Moreover, since $\sum^c_{j=1} f_j(\boldsymbol{x})^q \leq \sum^c_{j=1} (1/c)^q $ for all $\boldsymbol{x}$ and $ q \in (0,1]$, $\sum^c_{j=1} (1 - f_j(\boldsymbol{x})^q) \geq \sum^c_{j=1} (1 - (1/c)^q)$,
and 
\begin{align*}
\sum^c_{j=1} \frac{(1-f_j(\boldsymbol{x})^q)}{q} \geq \sum^c_{j=1} \frac{(1 - (1/c)^q)}{q} = \frac{c-c^{(1-q)}}{q}. 
\end{align*}
\end{proof}

\begin{theorem}
Under uniform noise with $\eta \leq 1 - \frac{1}{c}$,
\begin{align}
0 \leq (R_{\mathcal{L}_q}^{\eta}(f^*) - R_{\mathcal{L}_q}^{\eta}(\hat{f})) \leq A, 
\end{align}
and
\begin{align}
A' \leq R_{\mathcal{L}_q}(f^*) - R_{\mathcal{L}_q}(\hat{f}) \leq 0,
\end{align}
where $A = \frac{\eta[c^{(1-q)}-1]}{q(c-1)} \geq 0$, $A' = \frac{\eta [1 - c^{(1-q)}]}{q(c-1-\eta c)} < 0$, $f^*$ is the global minimizer of $R_{\mathcal{L}_q}(f)$, and $\hat{f}$ is the global minimizer of $R_{\mathcal{L}_q}^{\eta}(f)$.
\end{theorem}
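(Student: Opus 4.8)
The plan is to reduce the noisy risk to a multiple of the clean risk plus a correction term controlled by Lemma 2, and then choose the worst-case direction of that correction for each of the two inequalities. First I would write the noisy risk explicitly under uniform noise. Fixing $\boldsymbol{x}$ with true label $y=j$ and averaging over the corrupted label, with $S_f(\boldsymbol{x}) := \sum_{k=1}^c \mathcal{L}_q(f(\boldsymbol{x}),k)$, gives
\begin{align*}
\mathbb{E}_{\widetilde{y}\mid y=j}[\mathcal{L}_q(f(\boldsymbol{x}),\widetilde{y})] = (1-\eta)\mathcal{L}_q(f(\boldsymbol{x}),j) + \frac{\eta}{c-1}\bigl(S_f(\boldsymbol{x}) - \mathcal{L}_q(f(\boldsymbol{x}),j)\bigr).
\end{align*}
Collecting the coefficient of $\mathcal{L}_q(f(\boldsymbol{x}),j)$ and taking the expectation over $(\boldsymbol{x},y)$ yields the key identity
\begin{align*}
R_{\mathcal{L}_q}^{\eta}(f) = \alpha\, R_{\mathcal{L}_q}(f) + \frac{\eta}{c-1}\,\mathbb{E}_{\boldsymbol{x}}[S_f(\boldsymbol{x})], \qquad \alpha := 1 - \frac{\eta c}{c-1} = \frac{c-1-\eta c}{c-1}.
\end{align*}
The hypothesis $\eta \leq 1-\tfrac1c$ ensures $\alpha \geq 0$, and the strict case $\eta < 1-\tfrac1c$ (needed for $A'$, whose denominator is $(c-1)\alpha$) gives $\alpha>0$.

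Next I would invoke Lemma 2, which bounds $S_f(\boldsymbol{x})$ pointwise and hence in expectation: $L \leq \mathbb{E}_{\boldsymbol{x}}[S_f(\boldsymbol{x})] \leq U$ with $L = \frac{c-c^{(1-q)}}{q}$ and $U = \frac{c-1}{q}$, so that for any two classifiers the difference $\mathbb{E}[S_{f}] - \mathbb{E}[S_{g}]$ lies in $[-(U-L),\,U-L]$ where $U-L = \frac{c^{(1-q)}-1}{q}$. For the first chain, the lower bound $0$ is immediate since $\hat f$ minimizes $R_{\mathcal{L}_q}^{\eta}$. For the upper bound, I would subtract the key identity at $f^*$ and $\hat f$,
\begin{align*}
R_{\mathcal{L}_q}^{\eta}(f^*) - R_{\mathcal{L}_q}^{\eta}(\hat f) = \alpha\bigl(R_{\mathcal{L}_q}(f^*) - R_{\mathcal{L}_q}(\hat f)\bigr) + \frac{\eta}{c-1}\bigl(\mathbb{E}[S_{f^*}] - \mathbb{E}[S_{\hat f}]\bigr),
\end{align*}
bound the first term above by $0$ (optimality of $f^*$ on the clean risk, with $\alpha\geq0$), and bound the second by $\frac{\eta}{c-1}(U-L) = A$.

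For the second chain, I would invert the key identity to $R_{\mathcal{L}_q}(f) = \frac{1}{\alpha}\bigl(R_{\mathcal{L}_q}^{\eta}(f) - \frac{\eta}{c-1}\mathbb{E}[S_f]\bigr)$, valid since $\alpha>0$. The upper bound $0$ again follows from optimality of $f^*$. For the lower bound, I would subtract this inverted identity at $f^*$ and $\hat f$, use $R_{\mathcal{L}_q}^{\eta}(f^*) - R_{\mathcal{L}_q}^{\eta}(\hat f) \geq 0$ (optimality of $\hat f$), and bound $-\frac{\eta}{c-1}(\mathbb{E}[S_{f^*}]-\mathbb{E}[S_{\hat f}])$ below by $-\frac{\eta}{c-1}(U-L)$; dividing by $\alpha$ and using $\alpha(c-1)=c-1-\eta c$ recovers $A' = \frac{\eta[1-c^{(1-q)}]}{q(c-1-\eta c)}$.

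The main obstacle, and the reason we obtain bounds rather than the clean equality of Ghosh et al., is precisely that $\mathcal{L}_q$ is \emph{not symmetric}: $S_f(\boldsymbol{x})$ is not constant in $\boldsymbol{x}$ and cannot be pulled out of the risk. The whole argument therefore rests on replacing the uncontrolled quantity $\mathbb{E}[S_{f^*}]-\mathbb{E}[S_{\hat f}]$ by its extremal value $\pm(U-L)$ supplied by Lemma 2, with the sign chosen to match each inequality. A secondary subtlety is the boundary $\eta = 1-\tfrac1c$, where $\alpha=0$ degenerates the inversion step and $A'$; the second inequality should be read under strict $\eta < 1-\tfrac1c$.
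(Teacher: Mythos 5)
Your proof is correct and takes essentially the same route as the paper's: the same key identity $R_{\mathcal{L}_q}^{\eta}(f) = \bigl(1-\tfrac{\eta c}{c-1}\bigr)R_{\mathcal{L}_q}(f) + \tfrac{\eta}{c-1}\,\mathbb{E}_{\boldsymbol{x}}\bigl[\sum_{i=1}^{c}\mathcal{L}_q(f(\boldsymbol{x}),i)\bigr]$, the same application of Lemma 2, and the same optimality arguments for $f^*$ and $\hat f$, with your bounding of the difference $\mathbb{E}[S_{f^*}]-\mathbb{E}[S_{\hat f}]$ by $\pm(U-L)$ being algebraically identical to the paper's two-sided sandwich on $R_{\mathcal{L}_q}^{\eta}(f)$. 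Your closing remark that the second inequality requires strict $\eta < 1-\tfrac{1}{c}$ (since $A'$'s denominator $c-1-\eta c$ vanishes at equality) is a valid clarification that the paper leaves implicit.
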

\begin{proof}
Recall that for any softmax output $f$, 
\begin{align*}
R_{\mathcal{L}_q}(f) = \mathbb{E}_D [\mathcal{L}_q(f(\boldsymbol{x}),y_{\boldsymbol{x}})] = \mathbb{E}_{\boldsymbol{x},y_{\boldsymbol{x}}} [\mathcal{L}_q(f(\boldsymbol{x}),y_{\boldsymbol{x}})],
\end{align*}
and since for uniform noise with noise rate $\eta$, $\eta_{jk} = 1 - \eta $ for $j = k$, and $\eta_{jk} = \frac{\eta}{c-1}$ for $j \neq k$, we have

\begin{align*}
R_{\mathcal{L}_q}^{\eta}(f) &= \mathbb{E}_D [\mathcal{L}_q(f(\boldsymbol{x}),\widetilde{y}_{\boldsymbol{x}})] = \mathbb{E}_{\boldsymbol{x},\widetilde{y}_{\boldsymbol{x}}} [\mathcal{L}_q(f(\boldsymbol{x}),\widetilde{y}_{\boldsymbol{x}})] \\
&= \mathbb{E}_{\boldsymbol{x}} \mathbb{E}_{y_{\boldsymbol{x}} | \boldsymbol{x}} \mathbb{E}_{\widetilde{y}_{\boldsymbol{x}}| y_{\boldsymbol{x}}, \boldsymbol{x}} [\mathcal{L}_q(f(\boldsymbol{x}),\widetilde{y}_{\boldsymbol{x}})]\\
&= \mathbb{E}_{\boldsymbol{x}} \mathbb{E}_{y_{\boldsymbol{x}} | \boldsymbol{x}} [(1-\eta) \mathcal{L}_q(f(\boldsymbol{x}),y_{\boldsymbol{x}}) + \frac{\eta}{c-1}\sum_{i \neq y_{\boldsymbol{x}}} \mathcal{L}_q(f(\boldsymbol{x}),i) ] \\
&= \mathbb{E}_{\boldsymbol{x}} \mathbb{E}_{y_{\boldsymbol{x}} | \boldsymbol{x}} [(1-\eta) \mathcal{L}_q(f(\boldsymbol{x}),y_{\boldsymbol{x}}) + \frac{\eta}{c-1}(\sum_{i =1}^{c} \mathcal{L}_q(f(\boldsymbol{x}),i) -  \mathcal{L}_q(f(\boldsymbol{x}),y_{\boldsymbol{x}} ) )] \\
&= (1 - \eta)R_{\mathcal{L}_q}(f) - \frac{\eta}{c-1} R_{\mathcal{L}_q}(f) +\frac{\eta}{c-1}\mathbb{E}_{\boldsymbol{x}} \mathbb{E}_{y_{\boldsymbol{x}} | \boldsymbol{x}}[\sum_{i =1}^{c} \mathcal{L}_q(f(\boldsymbol{x}),i)] \\ 
&= (1 - \frac{\eta c}{c-1})R_{\mathcal{L}_q}(f) +\frac{\eta}{c-1}\mathbb{E}_{\boldsymbol{x}} \mathbb{E}_{y_{\boldsymbol{x}} | \boldsymbol{x}}[\sum_{i =1}^{c} \mathcal{L}_q(f(\boldsymbol{x}),i)]
\end{align*} 
Now, from Lemma 2, we have:
\begin{align*}
(1 - \frac{\eta c}{c-1})R_{\mathcal{L}_q}(f) +\frac{\eta[c-c^{(1-q)}]}{q(c-1)} \leq R_{\mathcal{L}_q}^{\eta}(f) \leq (1 - \frac{\eta c}{c-1})R_{\mathcal{L}_q}(f) +\frac{\eta}{q}.
\end{align*}
We can also write the inequality in terms of $R_{\mathcal{L}_q}(f)$:
\begin{align*}
(R_{\mathcal{L}_q}^{\eta}(f) - \frac{\eta}{q}) /(1 - \frac{\eta c}{c-1}) \leq R_{\mathcal{L}_q}(f)) \leq (R_{\mathcal{L}_q}^{\eta}(f) - \frac{\eta[c-c^{(1-q)}]}{q(c-1)})/(1 - \frac{\eta c}{c-1})
\end{align*}
Thus, for $\hat{f}$,  
\begin{align*}
R_{\mathcal{L}_q}^{\eta}(f^*) - R_{\mathcal{L}_q}^{\eta}(\hat{f}) \leq A + (1-\frac{\eta c}{c-1})(R_{\mathcal{L}_q}(f^*) - R_{\mathcal{L}_q}(\hat{f})) \leq A,
\end{align*}
or equivalently, 
\begin{align*}
R_{\mathcal{L}_q}(f^*) - R_{\mathcal{L}_q}(\hat{f}) \geq A' + (R_{\mathcal{L}_q}^{\eta}(f^*) - R_{\mathcal{L}_q}^{\eta}(\hat{f}))/(1-\frac{\eta c}{c-1}) \geq A'
\end{align*}
where $A = \frac{\eta[c^{(1-q)}-1]}{q(c-1)} \geq 0$ and $A' = \frac{\eta [1 - c^{(1-q)}]}{q(c-1-\eta c)}$, since $\eta \leq \frac{c-1}{c}$, and $f^*$ is a minimizer of  $R_{\mathcal{L}_q}(f)$. Lastly, since $\hat{f}$ is the minimizer of $R_{\mathcal{L}_q}^{\eta}(f)$, we have that $R_{\mathcal{L}_q}^{\eta}(f^*) - R_{\mathcal{L}_q}^{\eta}(\hat{f}) \geq 0$, or $R_{\mathcal{L}_q}(f^*) - R_{\mathcal{L}_q}(\hat{f}) \leq 0$ . This completes the proof.
\end{proof}
\begin{remark}
Note that, when $q = 1$, $A = 0$, and $f^*$ is also minimizer of risk under uniform noise. 
\end{remark}

\begin{theorem}
Under class dependent noise when $\eta_{ij} < (1-\eta_i)$, $\forall j \neq i$, $\forall i, j \in [c]$, where $\eta_{ij} = p(\widetilde{y} = j | y = i)$, $\forall j \neq i$, and $(1-\eta_{i}) = p(\widetilde{y} = i | y = i)$, if $R_{\mathcal{L}_q}(f^*) = 0$, then
\begin{align}
0 \leq (R_{\mathcal{L}_q}^{\eta}(f^*) - R_{\mathcal{L}_q}^{\eta}(\hat{f})) \leq B, 
\end{align}
where $B = \frac{c^{1-q} - 1}{q} \mathbb{E}_D(1 - \eta_{y_{\boldsymbol{x}}}) \geq 0$, $f^*$ is the global minimizer of $R_{\mathcal{L}_q}(f)$, and $\hat{f}$ is the global minimizer of $R_{\mathcal{L}_q}^{\eta}(f)$.
\end{theorem}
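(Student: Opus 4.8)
The plan is to follow the template of the proof of Theorem~1, but to exploit the hypothesis $R_{\mathcal{L}_q}(f^*)=0$ to evaluate the clean optimum exactly and to use the diagonal-dominance hypothesis $\eta_{ij}<1-\eta_i$ to control the off-diagonal contributions. First I would expand the class-dependent noisy risk, using $\eta_{y_{\boldsymbol{x}}y_{\boldsymbol{x}}}=1-\eta_{y_{\boldsymbol{x}}}$ and $\sum_{j\neq y_{\boldsymbol{x}}}\eta_{y_{\boldsymbol{x}}j}=\eta_{y_{\boldsymbol{x}}}$, as
\[
R_{\mathcal{L}_q}^{\eta}(f) = \mathbb{E}_{\boldsymbol{x}}\,\mathbb{E}_{y_{\boldsymbol{x}}|\boldsymbol{x}}\Big[(1-\eta_{y_{\boldsymbol{x}}})\,\mathcal{L}_q(f(\boldsymbol{x}),y_{\boldsymbol{x}}) + \sum_{j\neq y_{\boldsymbol{x}}}\eta_{y_{\boldsymbol{x}}j}\,\mathcal{L}_q(f(\boldsymbol{x}),j)\Big].
\]

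Next I would pin down $R_{\mathcal{L}_q}^{\eta}(f^*)$ precisely. Since $\mathcal{L}_q(f,j)=(1-f_j(\boldsymbol{x})^q)/q\geq 0$ with equality iff $f_j(\boldsymbol{x})=1$, the assumption $R_{\mathcal{L}_q}(f^*)=0$ forces $f^*_{y_{\boldsymbol{x}}}(\boldsymbol{x})=1$ almost surely; because the softmax outputs sum to one, this gives $f^*_j(\boldsymbol{x})=0$ and hence $\mathcal{L}_q(f^*,j)=1/q$ for every $j\neq y_{\boldsymbol{x}}$. Substituting into the expansion yields the clean identity $R_{\mathcal{L}_q}^{\eta}(f^*)=\tfrac{1}{q}\mathbb{E}_D[\eta_{y_{\boldsymbol{x}}}]$.

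The core step is a uniform lower bound on $R_{\mathcal{L}_q}^{\eta}(f)$ valid for every $f$. I would write each off-diagonal weight as $\eta_{y_{\boldsymbol{x}}j}=(1-\eta_{y_{\boldsymbol{x}}})-\delta_{y_{\boldsymbol{x}}j}$ with $\delta_{y_{\boldsymbol{x}}j}:=(1-\eta_{y_{\boldsymbol{x}}})-\eta_{y_{\boldsymbol{x}}j}>0$, the positivity being exactly the diagonal-dominance hypothesis. The pointwise conditional risk then collapses to
\[
(1-\eta_{y_{\boldsymbol{x}}})\sum_{j=1}^{c}\mathcal{L}_q(f(\boldsymbol{x}),j) - \sum_{j\neq y_{\boldsymbol{x}}}\delta_{y_{\boldsymbol{x}}j}\,\mathcal{L}_q(f(\boldsymbol{x}),j).
\]
I would lower-bound the first sum by $\tfrac{c-c^{1-q}}{q}$ via Lemma~2, and upper-bound each factor $\mathcal{L}_q(f,j)$ in the subtracted term by $1/q$ (boundedness of $\mathcal{L}_q$). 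Using $\sum_{j\neq y_{\boldsymbol{x}}}\delta_{y_{\boldsymbol{x}}j}=(c-1)(1-\eta_{y_{\boldsymbol{x}}})-\eta_{y_{\boldsymbol{x}}}$, the algebra telescopes to
\[
R_{\mathcal{L}_q}^{\eta}(f) \;\geq\; \frac{1}{q}\,\mathbb{E}_D\big[\,1 - c^{\,1-q}(1-\eta_{y_{\boldsymbol{x}}})\,\big].
\]
Finally I would combine the pieces: minimality of $\hat f$ gives $R_{\mathcal{L}_q}^{\eta}(f^*)-R_{\mathcal{L}_q}^{\eta}(\hat f)\geq 0$, and inserting the exact value of $R_{\mathcal{L}_q}^{\eta}(f^*)$ together with the lower bound at $\hat f$ produces $R_{\mathcal{L}_q}^{\eta}(f^*)-R_{\mathcal{L}_q}^{\eta}(\hat f)\leq \tfrac{c^{1-q}-1}{q}\mathbb{E}_D[1-\eta_{y_{\boldsymbol{x}}}]=B$.

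The only non-mechanical step is this core lower bound, and its subtle point is that the two estimates — Lemma~2 applied to the diagonal-heavy sum and $\mathcal{L}_q\leq 1/q$ applied to the correction — need not be attained by the same $f$; I would emphasize that each holds for every fixed $f$ individually, so their combination is a legitimate pointwise lower bound regardless of tightness. The sign of $\delta_{y_{\boldsymbol{x}}j}$ (diagonal dominance) is what makes the correction term work in our favor, while the boundedness $\mathcal{L}_q\leq 1/q$ — a property CCE lacks — is precisely what keeps $B$ finite.
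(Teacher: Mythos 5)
Your proposal is correct and follows essentially the same route as the paper's proof: the same decomposition of the class-dependent noisy risk, the same use of Lemma~2 to bound $\sum_{j=1}^{c}\mathcal{L}_q(f(\boldsymbol{x}),j)$, the boundedness $\mathcal{L}_q \leq 1/q$ on the diagonal-dominance correction term, and the one-hot characterization of $f^*$ forced by $R_{\mathcal{L}_q}(f^*)=0$. The only (cosmetic) difference is that you evaluate $R_{\mathcal{L}_q}^{\eta}(f^*)=\tfrac{1}{q}\mathbb{E}_D[\eta_{y_{\boldsymbol{x}}}]$ exactly rather than via the upper bound of Lemma~2 --- which is tight at the one-hot $f^*$ anyway --- and package the $1/q$ estimate as a standalone uniform lower bound on $R_{\mathcal{L}_q}^{\eta}(f)$, a mild streamlining of the same argument.
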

\begin{proof}
For class dependent noise, from Lemma 2, for any soft-max output function $f$ we have
\begin{align*}
R_{\mathcal{L}_q}^{\eta}(f) &= \mathbb{E}_D[(1 - \eta_{y_{\boldsymbol{x}}})\mathcal{L}_q(f(\boldsymbol{x}),y_{\boldsymbol{x}})] + \mathbb{E}_D [\sum_{i \neq y_{\boldsymbol{x}}} \eta_{y_{\boldsymbol{x}}i} \mathcal{L}_q(f(\boldsymbol{x}),i)]\\
&\leq \mathbb{E}_D[(1 - \eta_{y_{\boldsymbol{x}}})( \frac{c-1}{q} - \sum_{i \neq y_{\boldsymbol{x}}} \mathcal{L}_q(f(\boldsymbol{x}),i))] + \mathbb{E}_D [\sum_{i \neq y_{\boldsymbol{x}}} \eta_{y_{\boldsymbol{x}}i} \mathcal{L}_q(f(\boldsymbol{x}),i)] \\
&= \frac{c-1}{q} \mathbb{E}_D (1 - \eta_{y_{\boldsymbol{x}}}) - \mathbb{E}_D [\sum_{i \neq y_{\boldsymbol{x}}} (1-\eta_{y_{\boldsymbol{x}}} - \eta_{y_{\boldsymbol{x}}i}) \mathcal{L}_q(f(\boldsymbol{x}),i)], 
\end{align*}
and 
\begin{align*}
R_{\mathcal{L}_q}^{\eta}(f) \geq \frac{c-c^{1-q}}{q} \mathbb{E}_D (1 - \eta_{y_{\boldsymbol{x}}}) - \mathbb{E}_D [\sum_{i \neq y_{\boldsymbol{x}}} (1-\eta_{y_{\boldsymbol{x}}} - \eta_{y_{\boldsymbol{x}}i}) \mathcal{L}_q(f(\boldsymbol{x}),i)].
\end{align*}
Hence, 
\begin{align*}
(R_{\mathcal{L}_q}^{\eta}(f^*) - R_{\mathcal{L}_q}^{\eta}(\hat{f})) \leq & \frac{c^{1-q} - 1}{q} \mathbb{E}_D(1 - \eta_{y_{\boldsymbol{x}}}) + \\
& \mathbb{E}_D \sum_{i \neq y_{\boldsymbol{x}}} (1-\eta_{y_{\boldsymbol{x}}} - \eta_{y_{\boldsymbol{x}}i}) [\mathcal{L}_q(\hat{f}(\boldsymbol{x}),i) - \mathcal{L}_q(f^*(\boldsymbol{x}),i)].
\end{align*}
Now, from our assumption that $R_{\mathcal{L}_q}(f^*) = 0$, we have $ \mathcal{L}_q(f^*(\boldsymbol{x}),y_{\boldsymbol{x}}) = 0$. This is only satisfied iff $f^*_i(\boldsymbol{x}) = 1 $ when $i = y_{\boldsymbol{x}}$, and $f^*_i(\boldsymbol{x}) = 0 $ if $i \neq y_{\boldsymbol{x}}$. Hence, $\mathcal{L}_q(f^*(\boldsymbol{x}),i) = 1/q$ $\forall i \neq y_{\boldsymbol{x}}$. Moreover, by our assumption, we have $(1-\eta_{y_{\boldsymbol{x}}} - \eta_{y_{\boldsymbol{x}}i}) > 0$. As a result, to derive a upper bound for the expression above, we need to maximize the second term. Note that by definition of the $\mathcal{L}_q$ loss, $\mathcal{L}_q(\hat{f}(\boldsymbol{x}),i) \leq 1/q$ $\forall i \in [c]$, and hence the second term is maximized iff $\mathcal{L}_q(\hat{f}(\boldsymbol{x}),i) = 1/q$ $\forall i \neq y_{\boldsymbol{x}}$. 
This implies that the maximum of the second term is non-positive, so we have 
\begin{align*}
(R_{\mathcal{L}_q}^{\eta}(f^*) - R_{\mathcal{L}_q}^{\eta}(\hat{f})) \leq & \frac{c^{1-q} - 1}{q} \mathbb{E}_D(1 - \eta_{y_{\boldsymbol{x}}}).
\end{align*}
Lastly, since $\hat{f}$ is the minimizer of $R_{\mathcal{L}_q}^{\eta}(f)$, we have that $R_{\mathcal{L}_q}^{\eta}(f^*) - R_{\mathcal{L}_q}^{\eta}(\hat{f}) \geq 0$. This completes the proof.
\end{proof}

\begin{lemma}
For any $\boldsymbol{x}$ and $q \in (0,1)$, assuming $1/c \leq k < 1$ where $c$ represents the number of classes, the sum of truncated $\mathcal{L}_q$ loss with respect to all classes is bounded by:
\begin{align}
\tilde{d}{k}\mathcal{L}_{q}(\frac{1}{d}) + (c-\tilde{d})\mathcal{L}_{q}(k) \leq \sum^c_{j = 1} \mathcal{L}_{trunc}(f(\boldsymbol{x}), \boldsymbol{e}_j) \leq c \mathcal{L}_q(k),
\end{align}
where $\tilde{d} = \max (1, \frac{(1-q)^{1/q}}{k})$.
\end{lemma}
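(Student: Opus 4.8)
The plan is to rewrite the truncated loss as $\mathcal{L}_{trunc}(f(\boldsymbol{x}),\boldsymbol{e}_j)=\min\{\mathcal{L}_q(k),\mathcal{L}_q(f_j(\boldsymbol{x}))\}$, exploiting that $\mathcal{L}_q(p)=(1-p^q)/q$ is strictly decreasing in $p$ on $[0,1]$: when $f_j(\boldsymbol{x})\le k$ we have $\mathcal{L}_q(f_j(\boldsymbol{x}))\ge\mathcal{L}_q(k)$, so the truncation keeps the smaller value $\mathcal{L}_q(k)$, and when $f_j(\boldsymbol{x})>k$ it keeps $\mathcal{L}_q(f_j(\boldsymbol{x}))$. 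The upper bound is then immediate, since each summand is at most $\mathcal{L}_q(k)$, whence $\sum_{j}\mathcal{L}_{trunc}(f(\boldsymbol{x}),\boldsymbol{e}_j)\le c\,\mathcal{L}_q(k)$.

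For the lower bound I would fix the number $m\in\{0,1,\dots\}$ of coordinates with $f_j(\boldsymbol{x})>k$ (the ``large'' coordinates) and minimize over all softmax outputs with exactly that count. The $c-m$ small coordinates each contribute exactly $\mathcal{L}_q(k)$, regardless of their values, so only the large coordinates are free. Writing $S=\sum_{j\text{ large}}f_j(\boldsymbol{x})\le 1$ and using concavity of $t\mapsto t^q$ for $q\in(0,1)$, Jensen's inequality gives $\sum_{j\text{ large}}f_j(\boldsymbol{x})^q\le m^{1-q}S^q\le m^{1-q}$, so $\sum_{j\text{ large}}\mathcal{L}_q(f_j(\boldsymbol{x}))=\big(m-\sum_{j\text{ large}}f_j(\boldsymbol{x})^q\big)/q\ge (m-m^{1-q})/q=m\,\mathcal{L}_q(1/m)$, with equality when the large coordinates are all equal to $1/m$ and the small ones vanish. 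This reduces the whole problem to minimizing $\phi(m):=m\,\mathcal{L}_q(1/m)+(c-m)\mathcal{L}_q(k)$ over the count $m$.

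I would then analyze $\phi$ as a function of a real variable: $\phi'(m)=\tfrac1q\big(k^q-(1-q)m^{-q}\big)$ and $\phi''(m)=(1-q)m^{-q-1}>0$, so $\phi$ is convex with a unique stationary point $m^{\star}=(1-q)^{1/q}/k$, which is therefore its global minimizer. Since $\phi(m)$ lower-bounds the sum for every integer $m$, minimizing the convex $\phi$ over the feasible counts yields the claimed bound with $d=\tilde d=\max(1,m^{\star})$: if $m^{\star}\ge1$ the minimum is at $m^{\star}$, while if $m^{\star}<1$ the integer minimum over $\{0,1,\dots\}$ is at $m=1$, which one checks directly since $\phi(1)=(c-1)\mathcal{L}_q(k)<c\,\mathcal{L}_q(k)=\phi(0)$ using $\mathcal{L}_q(1)=0$. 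Evaluating $\phi(\tilde d)$ produces exactly $\tilde d\,\mathcal{L}_q(1/d)+(c-\tilde d)\mathcal{L}_q(k)$, and the equality case above shows the bound is tight.

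The main obstacle is that the per-coordinate map $g(p)=\min\{\mathcal{L}_q(k),\mathcal{L}_q(p)\}$ is non-convex (flat on $[0,k]$, then convex), so the overall minimization is not a convex program and cannot be dispatched directly; discretizing over the count $m$ is what tames it, and the $\max(1,\cdot)$ in $\tilde d$ is precisely the bookkeeping for the boundary between the interior minimizer and the forced $m\ge 1$ regime. I would also verify feasibility and consistency: because $(1-q)^{1/q}<1$ for $q\in(0,1)$, the optimal large value $1/m^{\star}=k/(1-q)^{1/q}$ genuinely exceeds $k$, so the extremal configuration is admissible, and $m^{\star}<1/k\le c$ (using $k\ge 1/c$) keeps $c-\tilde d\ge 0$ so that the stated bound is well defined.
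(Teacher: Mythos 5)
Your proof is correct and takes essentially the same route as the paper's: both arguments reduce the lower bound to the extremal configuration in which $d$ coordinates share the mass equally above the threshold while the remaining $c-d$ coordinates contribute $\mathcal{L}_q(k)$, and then minimize the resulting convex one-variable function $d\,[k^q-(1/d)^q]$ over continuous $d$, obtaining the stationary point $(1-q)^{1/q}/k$ and hence $\tilde d=\max\left(1,\frac{(1-q)^{1/q}}{k}\right)$. The only difference is that you rigorously establish---via Jensen's inequality for the concave map $t\mapsto t^q$, explicit handling of the $m=0$ count, and the feasibility check $m^{\star}<1/k\leq c$---the reduction step that the paper merely asserts with ``it can be verified that.''
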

\begin{proof}
For the upper bound, by definition of truncated $\mathcal{L}_{q}$, $\mathcal{L}_{trunc}(f(\boldsymbol{x}), \boldsymbol{e}_j) \leq \mathcal{L}_q(k)$ for any $\boldsymbol{x}$ and $j$. Hence, $\sum^c_{j = 1} \mathcal{L}_{trunc}(f(\boldsymbol{x}), \boldsymbol{e}_j) \leq c \mathcal{L}_q(k)$. 

For the lower bound, it can be verified that,  $$ \sum^c_{j = 1} \mathcal{L}_{trunc}(\tilde{f}(\boldsymbol{x}), \boldsymbol{e}_j) \leq\sum^c_{j = 1} \mathcal{L}_{trunc}(f(\boldsymbol{x}), \boldsymbol{e}_j)$$ 
where $\tilde{f}(\boldsymbol{x}) = (p,\cdots,p,0,\cdots, 0)$, with $p = 1/d \geq k$ and $d$ is the number of elements in $f(\boldsymbol{x})$ with a value $\leq k$.  Note that since $p >k$, $ 1 \leq d \leq 1/k$: 
\begin{align*}
\sum^c_{j = 1} \mathcal{L}_{trunc}(\tilde{f}(\boldsymbol{x}), \boldsymbol{e}_j) = d\mathcal{L}_{q}(p) + (c-d)\mathcal{L}_{q}(k) = d\mathcal{L}_{q}(\frac{1}{d}) + (c-d)\mathcal{L}_{q}(k).
\end{align*}
We can get a universal lower bound (that does not depend on $f$) by minimizing the above function with respect to $d$. To do so, we treat $d$ to be continuous. By definition of $\mathcal{L}_{q}$ loss, and recall that $0<q<1$,
\begin{align*}
\min_{d \in [1,1/k]} d\mathcal{L}_{q}(\frac{1}{d}) + (c-d)\mathcal{L}_{q}(k) &= \min_{d \in [1,1/k]} d[(1-(\frac{1}{d})^q)/q - (1-k^q)/q] = \min_{d\in [1,1/k]} d[(k^q -(\frac{1}{d})^q)].
\end{align*}
We can verify using the second derivative test that the above objective function is convex. As a result, we can find the minimum by taking its derivative. Doing so, we find that $d = \frac{(1-q)^{1/q}}{k}$ minimizes the above objective function. Hence, the lower bound is 
\begin{align*}
\tilde{d}{k}\mathcal{L}_{q}(\frac{1}{d}) + (c-\tilde{d})\mathcal{L}_{q}(k) \leq \sum^c_{j = 1} \mathcal{L}_{trunc}(f(\boldsymbol{x}), \boldsymbol{e}_j),
\end{align*}
where $\tilde{d} = \max (1, \frac{(1-q)^{1/q}}{k})$.
\end{proof}
\begin{remark}
Using Lemma 3, we can prove that the proposed truncated loss leads to more noise robust training following the same arguments as in Theorem 1 and 2.  
\end{remark}

\end{document}